\newtheorem{thm}{Theorem}[section]
\newtheorem{prop}[thm]{Proposition}
\theoremstyle{definition}
\numberwithin{equation}{section}
\definecolor{orange}{rgb}{1,0.5,0}
\newcommand{\vect}[1]{\boldsymbol{#1}}
\newcommand{\matr}[1]{\boldsymbol{#1}}
\newcommand{\HRule}{\rule{\linewidth}{0.5mm}}
\title{BSVM: A Banded Suport Vector Machine}
\author{Gautam V. Pendse\thanks{To whom correspondence should be addressed. e-mail: gpendse@mclean.harvard.edu}$^{\,\,\,1}$ \mbox{}\\ \\ \\ \\ 
$^1$ P.A.I.N Group, Imaging and Analysis Group (IMAG), McLean Hospital, Harvard Medical School}
\begin{document}

\begin{titlepage}
\begin{center}

\includegraphics[width=0.15\textwidth]{./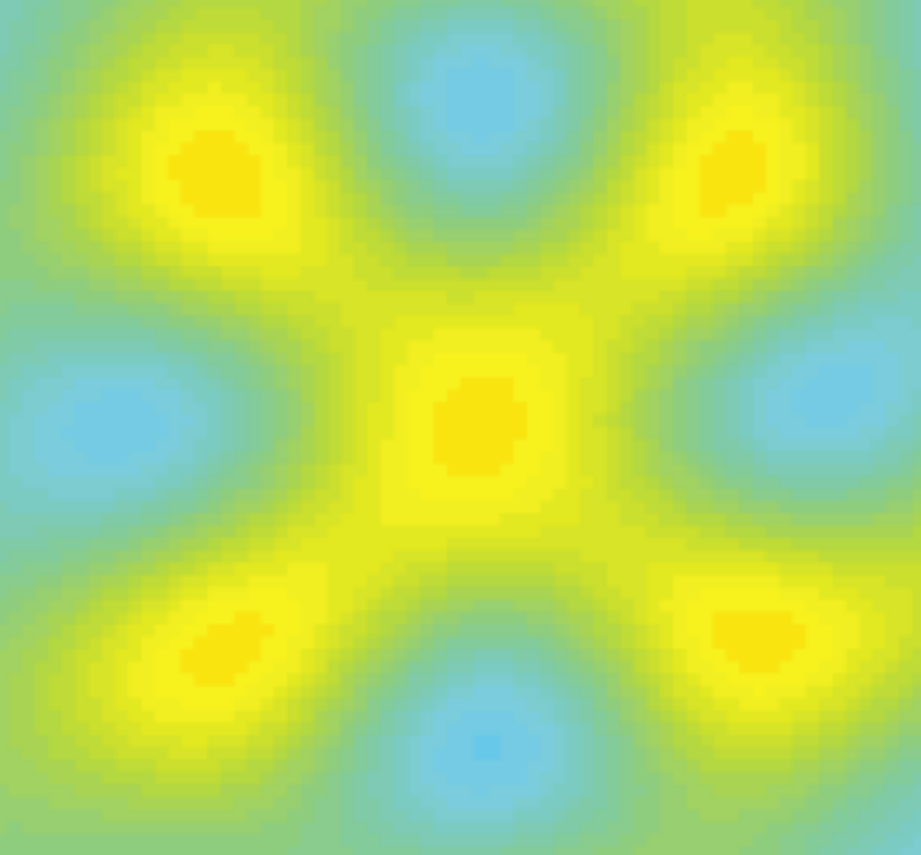}\\[1cm]    

\HRule\\[0.4cm]
\textbf{ \Huge BSVM}
\HRule\\[1cm]
 {\textsc{\LARGE A Banded Support Vector Machine}} \\[2cm]

\begin{minipage}{1 \textwidth}
\begin{center}

\textit{ \Large{Author:} }\\[1cm]

\textbf{\Large{ Gautam V. Pendse }}\\[0.5cm]
{\large \verb+gpendse@mclean.harvard.edu+}\\[0.5cm]
\large{P.A.I.N Group, Brain Imaging Center}\\
\large{McLean Hospital, Harvard Medical School}\\

\end{center}
\end{minipage}\\[1cm]

\vfill

\textsc{\large \today}

\end{center}
\end{titlepage}


\newpage 

\begingroup
\hypersetup{linkcolor=red}
\tableofcontents
\listoffigures
\endgroup

\section*{Abstract}
We describe a novel binary classification technique called Banded SVM (B-SVM). In the standard C-SVM formulation of \citet{Cortes:1995}, the decision rule is encouraged to lie in the interval $[1, \infty]$. The new B-SVM objective function contains a penalty term that encourages the decision rule to lie in a user specified range $[\rho_1, \rho_2]$. In addition to the standard set of support vectors (SVs) near the class boundaries, B-SVM results in a second set of SVs in the interior of each class.


\section*{Notation}
\begin{itemize}
\item[\PencilRight] Scalars and functions will be denoted in a non-bold font (e.g., $\beta_0, C, g$). Vectors and vector functions will be denoted in a bold font using lower case letters (e.g., $\vect{x}, \vect{\beta}, \vect{h}$). Matrices will be denoted in bold font using upper case letters (e.g., $\matr{B}, \matr{H}$). The transpose of a matrix $\matr{A}$ will be denoted by $\matr{A^T}$ and its inverse will be denoted by $\matr{A^{-1}}$. $\matr{I_p}$ will denote the $p \times p$ identity matrix and $\mathbf{0}$ will denote a vector or matrix of all zeros whose size should be clear from context.

\item[\PencilRight] $|x|$ will denote the absolute value of $x$ and $\mathcal{I}(x > a)$ is an indicator function that returns $1$ if $x > a$ and $0$ otherwise.

\item[\PencilRight] The $j$th component of vector $\vect{t}$ will be denoted by $t_{j}$. The element $(i,j)$ of matrix $\matr{G}$ will be denoted by $G(i,j)$ or $G_{ij}$. The 2-norm of a $p \times 1$ vector $\vect{x}$ will be denoted by $|| \vect{x} ||_2 = +\sqrt{ \sum_{i = 1}^p x_i^2 }$. Probability distribution of a random vector $\vect{x}$ will be denoted by $\mathbf{P}_{\vect{x}}(\vect{x})$. $\mathbf{E} \left[ f(\vect{s}, \vect{\eta} ) \right]$ denotes the expectation of $f(\vect{s}, \vect{\eta})$ with respect to both random variables $\vect{s}$ and $\vect{\eta}$.

\end{itemize}

\section{Introduction}

We consider the standard binary classification problem. Suppose $y_i$ is the class membership label ($+1$ for class $+1$ and $-1$ for class $-1$) associated with a feature vector $\vect{x_i}$. Given $n$ such $(\vect{x_i}, y_i)$ pairs, we would like to learn a linear decision rule $g(\vect{x})$ that can be used to accurately predict the class label $y$ associated with feature vector $\vect{x}$.

In C-SVM \citep{Vapnik:1963, Boser:1992, Cortes:1995}, one can think of the linear decision rule $g$ as a means of measuring membership in a particular class. Given a feature vector $\vect{x}$, C-SVM encourages the function $g(\vect{x})$ to be positive if $\vect{x} \in$ class $+1$ and negative if $\vect{x} \in$ class $-1$. 

We motivate the development of B-SVM in the following way. Suppose that vector $\vect{x}$ comes from an arbitrary probability distribution $\mathbf{P}_{\vect{x}}(\vect{x})$ with mean $\mathbf{E}[\vect{x}] = \vect{\mu}$ and finite co-variance $\mbox{Cov}[\vect{x}] = \matr{\Sigma}$. Consider the linear decision rule $g(\vect{x}) = \vect{\beta}^T \vect{x} + \beta_0$. It is easy to see that $g(\vect{x})$ has mean $\mathbf{E}[g(\vect{x})] = \vect{\beta}^T \vect{\mu} + \beta_0$ and covariance $\mbox{Cov}[g(\vect{x})] = \vect{\beta}^T \matr{\Sigma} \vect{\beta}$. By Chebyshev's inequality, there exists a high probability band around $\mathbf{E}[g(\vect{x})]$ where $g(\vect{x})$ is expected to lie when $\vect{x}$ comes from $\mathbf{P}_{\vect{x}}(\vect{x})$.

Hence, for every probability distribution of vectors $\vect{x}$ from class $+1$ and class $-1$ with finite co-variance, $g(\vect{x})$ is expected to lie in a certain high probability band. In B-SVM, we choose $g(\vect{x})$ to encourage:
\begin{itemize}
\item[\PencilRight] $y \, g(\vect{x}) > 0 \,\,\,$ \HandLeft $\,$ same condition as C-SVM
\item[\PencilRight] $y \, g(\vect{x}) \in \mbox{certain high probability \textbf{\textit{band}}} \,\,\,$ \HandLeft $\,$ new B-SVM condition
\end{itemize}

Both of the above conditions can be satisfied if we encourage:

\begin{equation}\label{eq0}
\boxed{y \, g(\vect{x}) \in [\rho_1, \rho_2] \mbox{ with } \rho_2 > \rho_1 > 0}
\end{equation}

Since non-linear decision rules in C-SVM are simply linear decision rules operating in a high dimensional space via the kernel trick \citep{Boser:1992}, the B-SVM band formation argument holds for non-linear decision rules as well.

\section{Problem setup}
As per standard SVM terminology, assume that we are given $n$ data-label pairs $(\vect{x_i}, y_i)$ where $\vect{x_i}$ are $m \times 1$ vectors and the data labels $y_i \in \{-1, 1\}$. First, we consider only the linear case and afterwards transform to the general case via the kernel trick. Let $m \times 1$ vector $\vect{\beta}$ and scalar $\beta_0$ be parameters of a linear decision rule $g(\vect{x}) = \vect{\beta}^T \vect{x} + \beta_0 = 0$ separating class $+1$ and $-1$ such that $g(\vect{x})  > 0$ if $\vect{x}$ belongs to class $+1$ and vice versa. 

\subsection{C-SVM objective function}
The C-SVM objective function \citep{Cortes:1995} to be minimized can be written as:
\begin{equation}\label{1}
f_{CSVM}(\vect{\beta}, \beta_0) = \frac{1}{2} || \vect{\beta} ||^2_2 + C \sum_{i = 1}^n [1 - y_i(\vect{\beta}^T \vect{x_i} + \beta_0)]_{+}
\end{equation}
where $[t]_{+}$ is the positive part of $t$:
\begin{align}\label{1a}
[t]_{+}=\begin{cases}
0& \text{if $t \le 0$},\\
t& \text{if $t > 0$}.
\end{cases}
\end{align}
and $C$ governs the regularity of the solution. The C-SVM objective function penalizes signed decisions $y_i (\vect{\beta}^T \vect{x_i} + \beta_0)$ whenever their value is below 1. This is the only penalty in C-SVM. 

\subsection{B-SVM objective function}
We present below the novel B-SVM objective function that we wish to minimize:
\begin{equation}\label{2}
f_{BSVM}(\vect{\beta}, \beta_0) = \frac{1}{2} ||\vect{\beta}||^2_2 \, + \,  \underbracket[2pt]{ C_1 \sum_{i = 1}^n [\rho_1 - y_i(\vect{\beta}^T \vect{x_i} + \beta_0)]_{+} }_{\mbox{\textcolor{red}{C-SVM like penalty}}} \, + \,  \underbracket[2pt]{ C_2 \sum_{i = 1}^n [y_i(\vect{\beta}^T \vect{x_i} + \beta_0) - \rho_2]_{+} }_{\mbox{\textcolor{red}{novel B-SVM penalty}}}
\end{equation}
where $\rho_2 > \rho_1 > 0$ are margin parameters specified by the user and $C_1$ and $C_2$ are regularization constants. This objective function has two penalty terms:
\begin{itemize}
\item[\PencilRight] The first penalty term is similar to C-SVM. It penalizes signed decisions $y_i (\vect{\beta}^T \vect{x_i} + \beta_0)$ whenever their values are below $\rho_1$ (as opposed to 1 in C-SVM). 

\item [\PencilRight] The second penalty term is novel. It penalizes signed decisions $y_i (\vect{\beta}^T \vect{x_i} + \beta_0)$ when their values are above $\rho_2$. 
\end{itemize}
The net effect of these penalty terms is to encourage $y_i (\vect{\beta}^T \vect{x_i} + \beta_0)$ to lie in the interval $[\rho_1, \rho_2]$. Please see Figure \ref{figure1} for a sketch of the two penalty terms in B-SVM. 

\begin{figure}[htbp]
\begin{center}
\includegraphics[width = 6.0in] {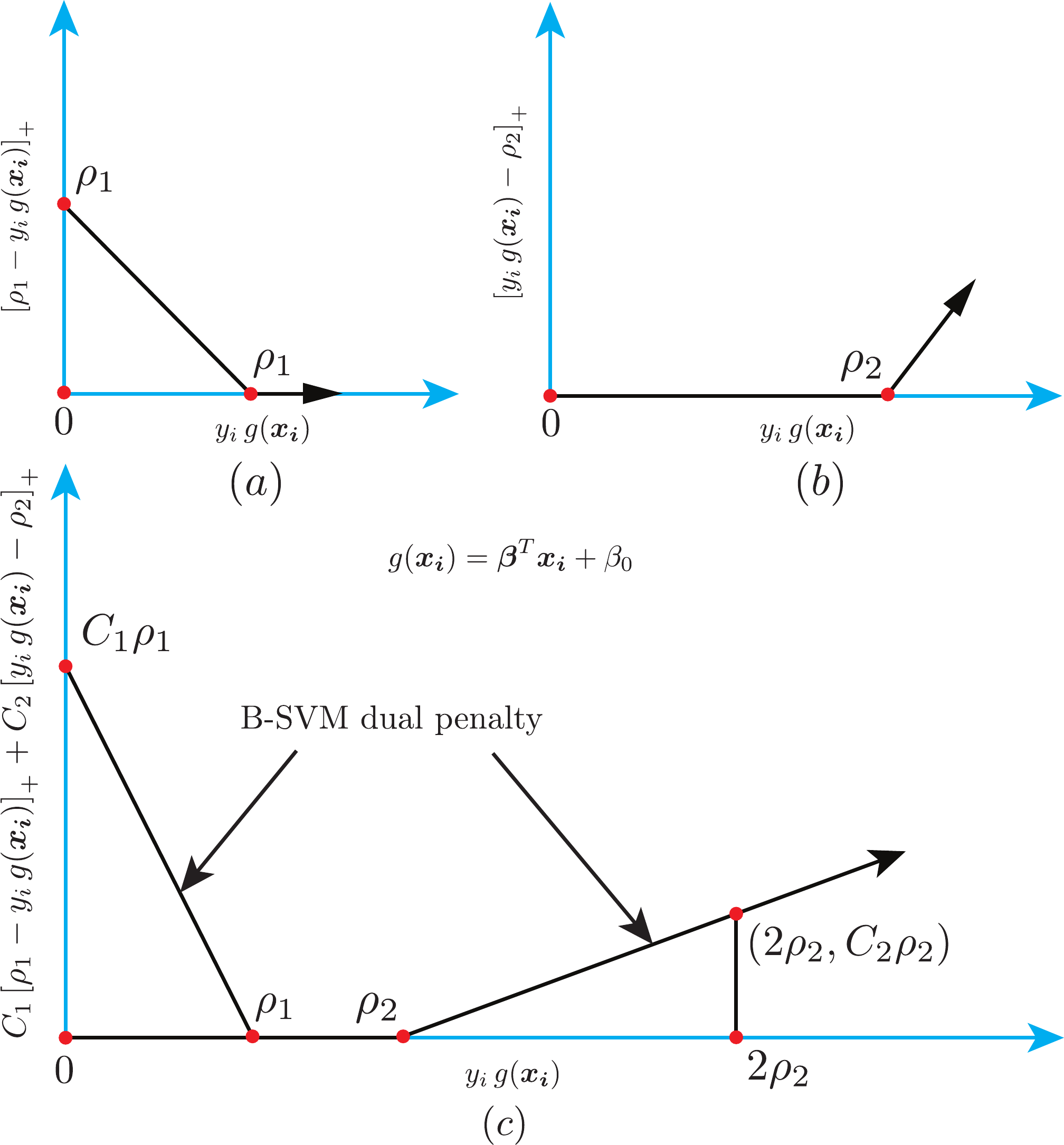}
\caption{(a) Standard C-SVM like penalty function penalizes $y_i (\vect{\beta}^T \vect{x_i} + \beta_0) < \rho_1$. In B-SVM, $\rho_1$ replaces the constant $1$ from C-SVM. (b) Novel B-SVM penalty function. This function penalizes $y_i( \vect{\beta}^T \vect{x_i} + \beta_0) > \rho_2$. (c) Total penalty function for B-SVM. If $y_i (\vect{\beta}^T \vect{x_i} + \beta_0) \in [\rho_1, \rho_2]$ then the total penalty is $0$. Choosing $C_2 < C_1$ will impose a milder penalty for values of $y_i (\vect{\beta}^T \vect{x_i} + \beta_0) > \rho_2$. }
\label{figure1}
\end{center}
\end{figure}

\section{Solving the B-SVM problem}
We derive the B-SVM dual problem in order to maximize a lower bound on the B-SVM primal objective function in equation \ref{2}. This dual problem will be simpler to solve compared to the primal form \ref{2}. We proceed as follows:
\begin{itemize}
\item[\PencilRight] As shown in \ref{3}, the primal problem in \ref{2} can be modified into a \textit{strictly} convex objective function with linear inequality constraints using slack variables. 

\item[\PencilRight] Consequently, \textit{strong duality} holds and the maximum value of the B-SVM dual objective function is equal to the minimum value of the B-SVM primal objective function in \ref{2}. 
\end{itemize}
For more details on convex duality, please see \cite{Nocedal:book}.

\subsection{The B-SVM dual problem}
We introduce slack variables:
\begin{align}\label{eq3a}
\xi_i &= [\rho_1 - y_i(\vect{\beta}^T \vect{x_i} + \beta_0)]_{+} \\
\eta_i &= [y_i(\vect{\beta}^T \vect{x_i} + \beta_0) - \rho_2]_{+} \nonumber
\end{align} 
into the primal objective function in \ref{2}. The modified optimization problem can be written as:
\begin{align}\label{3}
\min_{\vect{\beta}, \beta_0, \vect{\xi}, \vect{\eta}} f_{BSVM}(\vect{\beta}, \beta_0, \vect{\xi}, \vect{\eta}) &=  \frac{1}{2} ||\vect{\beta}||^2_2 + C_1 \sum_{i = 1}^n \xi_i + C_2 \sum_{i = 1}^n \eta_i  \\
\xi_i &\ge 0 \nonumber && \text{Lagrange multiplier $\mu_i$}\\
\eta_i &\ge 0 \nonumber && \text{Lagrange multiplier $\psi_i$}\\
\xi_i &\ge \rho_1 - y_i(\vect{\beta}^T \vect{x_i} + \beta_0) \nonumber && \text{Lagrange multiplier $\alpha_i$}\\
\eta_i &\ge -\rho_2 + y_i(\vect{\beta}^T \vect{x_i} + \beta_0) \nonumber && \text{Lagrange multiplier $\theta_i$}
\end{align} 
After introducing Lagrange multipliers for each inequality constraint as shown in \ref{3}, the Lagrangian function for problem \ref{3} can be written as:
\begin{align}\label{4}
L(\vect{\beta}, \beta_0, \vect{\xi}, \vect{\eta}, \vect{\alpha}, \vect{\theta}, \vect{\mu}, \vect{\psi}) =  \frac{1}{2} || \vect{\beta} ||^2_2 \, + \, &C_1 \sum_{i = 1}^n \xi_i + C_2 \sum_{i = 1}^n \eta_i - \sum_{i = 1}^n \alpha_i \{ \xi_i - \rho_1 + y_i (\vect{\beta}^T \vect{x_i} + \beta_0)\}  \\
&- \sum_{i = 1}^n \theta_i \{ \eta_i + \rho_2 - y_i(\vect{\beta}^T \vect{x_i} + \beta_0) \} - \sum_{i = 1}^n \mu_i \xi_i - \sum_{i = 1}^n \psi_i \eta_i \nonumber
\end{align}
where
\begin{equation}\label{4a}
\alpha_i, \theta_i, \mu_i, \psi_i \ge 0
\end{equation}
Next, we solve for primal variables $\vect{\beta}, \beta_0, \vect{\xi}, \vect{\eta}$ in terms of the dual variables $\vect{\alpha}, \vect{\theta}, \vect{\mu}, \vect{\psi}$ by minimizing $L(\vect{\beta}, \beta_0, \vect{\xi}, \vect{\eta}, \vect{\alpha}, \vect{\theta}, \vect{\mu}, \vect{\psi})$ with respect to the primal variables. Since the Lagrangian in \ref{4} is a convex function of the primal variables, its unique global minimum can be obtained using the first order Karush Kuhn Tucker (KKT) conditions given in \ref{5} - \ref{8}:
\begin{equation}\label{5}
\frac{\partial L}{\partial \vect{\beta}} = \vect{\beta} - \sum_{i = 1}^n \alpha_i y_i \vect{x_i} + \sum_{i = 1}^n \theta_i y_i \vect{x_i} = 0
\end{equation}

\begin{equation}\label{6}
\frac{\partial L}{\partial \beta_0} = -\sum_{i = 1}^n \alpha_i y_i + \sum_{i = 1}^n \theta_i y_i = 0
\end{equation}

\begin{equation}\label{7}
\frac{\partial L}{\partial \xi_l} = C_1 - \alpha_l - \mu_l = 0
\end{equation}

\begin{equation}\label{8}
\frac{\partial L}{\partial \eta_l} = C_2 - \theta_l - \psi_l = 0
\end{equation}

From \ref{5}, the vector $\vect{\beta}$ is given by:
\begin{equation}\label{eq1}
 \vect{\beta} = \sum_{i = 1}^n (\alpha_i - \theta_i) \, y_i \, \vect{x_i} 
 \end{equation}
From \ref{6}, vectors $\vect{\alpha}$ and $\vect{\theta}$ satisfy the equality constraint: 
 \begin{equation}\label{eq2}
 \sum_{i = 1}^n (\alpha_i - \theta_i) \, y_i = 0
 \end{equation}

Combining \ref{7}, \ref{8} and \ref{4a}, the elements of $\vect{\alpha}$ must satisfy: 
\begin{equation}\label{eq3}
0 \le \alpha_i \le C_1
\end{equation}
and elements of $\vect{\theta}$ satisfy: 
\begin{equation}\label{eq4}
0 \le \theta_i \le C_2
\end{equation}
Let $\matr{B}$ be a $n \times n$ matrix with entries:
\begin{equation}\label{eq4a}
B_{ij} = y_i y_j \, \vect{x_i}^T \vect{x_j} 
\end{equation}
 and $\vect{e}_n$ be a $n \times 1$ vector of $n$ ones (in MATLAB notation: $\vect{e}_n$ = \verb+ones(n,1)+). Substituting $\vect{\beta}$ from \ref{eq1} in \ref{4} and noting the constraints \ref{7}, \ref{8} and \ref{eq2}, we get the B-SVM dual problem:
\begin{empheq}[box=\fbox]{align}\label{8a}
\max_{\vect{\alpha}, \vect{\theta}} L_D(\vect{\alpha}, \vect{\theta}) &= \rho_1 \, \vect{e}^T_n \vect{\alpha} - \rho_2 \, \vect{e}^T_n \vect{\theta} -\frac{1}{2} (\vect{\alpha} - \vect{\theta})^T \matr{B} (\vect{\alpha} - \vect{\theta}) \\
&\vect{0} \le \vect{\alpha} \le C_1 \, \vect{e}_n \nonumber \\
&\vect{0} \le \vect{\theta} \le C_2 \, \vect{e}_n \nonumber \\
&(\vect{\alpha} - \vect{\theta})^T \vect{y} = 0 \nonumber
\end{empheq}

If $C_2 = 0$ and $\rho_1 = 1$ then \ref{eq4} implies $\vect{\theta} = \vect{0}$ and hence we recover the standard C-SVM dual problem. 

\subsection{Kernelifying B-SVM}
Let $\vect{h}$ be a non-linear vector function that takes inputs $\vect{x_i}$ into a high dimensional space. Then we recover \textbf{\textit{kernel}} B-SVM by doing linear B-SVM on the data-label pairs $(\vect{h}(\vect{x_i}), y_i)$ instead of the original pairs $(\vect{x_i}, y_i)$. In practice, we do not need $\vect{h}(\vect{x})$ explicitly but only the dot products  through a kernel matrix $\matr{K}$ with elements:
\begin{equation}\label{eq4b} 
K_{ij} = K(\vect{x_i},\vect{x_j}) = \vect{h}(\vect{x_i})^T \vect{h}(\vect{x_j})
\end{equation}
This is the so-called kernel trick. From \ref{eq4a}, elements of matrix $\matr{B}$ for transformed feature vectors $\vect{h}(\vect{x})$ are given by:
\begin{equation}\label{eq4c}
B_{ij} = y_i y_j \, \vect{h}(\vect{x_i})^T  \vect{h}(\vect{x_j})  = y_i y_j \, K_{ij} = y_i y_j \, K(\vect{x_i},\vect{x_j})
\end{equation}
For a new point $\vect{x}$, the decision rule is then given by: 
\begin{equation}\label{eq4ca}
g(\vect{x}) = \vect{\beta}^T \vect{h}(\vect{x}) + \beta_0
\end{equation}
and $\vect{x}$ is classified into class $+1$ if $g(\vect{x}) > 0$  and into class $-1$ if $g(\vect{x}) < 0$. From \ref{eq1}, for the transformed feature vectors $\vect{h}(\vect{x_i})$, we have:
\begin{equation}\label{eq4cb}
\vect{\beta} = \sum_{i = 1}^n (\alpha_i - \theta_i) \, y_i \, \vect{h}(\vect{\vect{x_i}})
\end{equation}
Using the kernel trick, calculation of $g(\vect{x})$ does not need $\vect{h}(\vect{x})$ explicitly as we can write:
\begin{equation}\label{eq4d}
\boxed{
g(\vect{x}) = \vect{\beta}^T \vect{h}(\vect{x}) + \beta_0 =  \sum_{i = 1}^n (\alpha_i - \theta_i) \, y_i \, K(\vect{x_i}, \vect{x}) + \beta_0
}
\end{equation}

\begin{prop}
The B-SVM dual objective function $L_D(\vect{\alpha}, \vect{\theta})$ in \ref{8a} is a concave function of $\vect{\alpha}$ and $\vect{\theta}$.
\end{prop}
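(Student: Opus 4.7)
The plan is to show that $L_D$ is a sum of an affine function of $(\vect{\alpha}, \vect{\theta})$ and the negative of a convex quadratic, by factoring $\matr{B}$ as a Gram matrix and exhibiting $L_D$ as a concave composition.

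First, I would observe that the term $\rho_1 \vect{e}_n^T \vect{\alpha} - \rho_2 \vect{e}_n^T \vect{\theta}$ is linear in $(\vect{\alpha}, \vect{\theta})$, hence both convex and concave; so the whole question reduces to whether the quadratic term $q(\vect{\alpha}, \vect{\theta}) := \tfrac{1}{2}(\vect{\alpha} - \vect{\theta})^T \matr{B}(\vect{\alpha} - \vect{\theta})$ is convex on $\mathbb{R}^n \times \mathbb{R}^n$.

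Next, I would factor the matrix $\matr{B}$. From \ref{eq4a}, $B_{ij} = y_i y_j \vect{x}_i^T \vect{x}_j$, so if $\matr{X}$ denotes the matrix whose $i$th column is $y_i \vect{x}_i$, then $\matr{B} = \matr{X}^T \matr{X}$, which is positive semidefinite (the same argument applies in the kernel case using $y_i \vect{h}(\vect{x}_i)$, since $K$ is a valid kernel). Consequently, setting $\vect{u} := \vect{\alpha} - \vect{\theta}$, we can write $q(\vect{\alpha}, \vect{\theta}) = \tfrac{1}{2}\|\matr{X}\vect{u}\|_2^2 = \tfrac{1}{2}\|\matr{X}(\vect{\alpha} - \vect{\theta})\|_2^2$, which is the composition of the convex function $\vect{v} \mapsto \tfrac{1}{2}\|\vect{v}\|_2^2$ with the affine map $(\vect{\alpha}, \vect{\theta}) \mapsto \matr{X}(\vect{\alpha} - \vect{\theta})$, and therefore convex in $(\vect{\alpha}, \vect{\theta})$. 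Hence $-q$ is concave, and adding the linear term preserves concavity, giving the result.

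As a cross-check I would compute the Hessian of $L_D$ with respect to the stacked variable $\vect{z} = (\vect{\alpha}^T, \vect{\theta}^T)^T$ and verify that
\begin{equation*}
\nabla^2 L_D(\vect{z}) = -\begin{bmatrix} \matr{B} & -\matr{B} \\ -\matr{B} & \matr{B} \end{bmatrix} = -\begin{bmatrix} \matr{I} \\ -\matr{I} \end{bmatrix} \matr{B} \begin{bmatrix} \matr{I} & -\matr{I} \end{bmatrix}
\end{equation*}
is negative semidefinite, which follows immediately from $\matr{B} \succeq 0$ via the same factorization.

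The only conceivable obstacle is the positive semidefiniteness of $\matr{B}$, but this is essentially immediate once one recognizes $\matr{B}$ as a Gram matrix of the signed (possibly feature-mapped) data vectors; the rest is a routine composition-with-affine-map argument.
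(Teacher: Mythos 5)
Your proposal is correct and rests on the same key fact as the paper's proof, namely that $\matr{B}$ is a Gram matrix of the signed (feature-mapped) data vectors and hence positive semidefinite; your Hessian cross-check, with quadratic form $-(\vect{c}-\vect{d})^T\matr{B}(\vect{c}-\vect{d})$, is literally the paper's computation. The only difference is cosmetic: you package the argument as ``affine plus negative of a convex-composed-with-affine quadratic'' rather than verifying negative semidefiniteness of the block Hessian directly, and you are slightly more careful than the paper in calling $\matr{K}$ positive \emph{semi}definite rather than positive definite.
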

\begin{proof}
Since $\matr{B}$ is symmetric, the Hessian of $L_D$ with respect to the vector $(\vect{\alpha}, \vect{\theta})$ is given by:
\begin{equation}
\matr{H} = \begin{pmatrix}
-\matr{B} & \matr{B}\\
\matr{B} & -\matr{B}
\end{pmatrix}
\end{equation}
If $\vect{c}$ and $\vect{d}$ are arbitrary $n \times 1$ vectors,
\begin{align}\label{prop_eq1}
\begin{pmatrix} 
\vect{c}^T & \vect{d}^T
\end{pmatrix} \matr{H} \begin{pmatrix} \vect{c} \\ \vect{d} \end{pmatrix}
= \vect{c}^T (-\matr{B} \vect{c} + \matr{B} \vect{d}) + \vect{d}^T (\matr{B} \vect{c} - \matr{B} \vect{d}) = -(\vect{c} - \vect{d})^T \matr{B} \, (\vect{c} - \vect{d})
\end{align}

From \ref{eq4c}, 
\begin{equation}
(\vect{c} - \vect{d})^T \matr{B} \, (\vect{c} - \vect{d}) = \sum_{i = 1}^n \sum_{j = 1}^n (\vect{c}-\vect{d})_i \{ y_i y_j \matr{K}(\vect{x_i}, \vect{x_j}) \} (\vect{c}-\vect{d})_j = \sum_{i = 1}^n \sum_{j = 1}^n \{ (\vect{c}-\vect{d})_i y_i \} \matr{K}(\vect{x_i}, \vect{x_j}) \{(\vect{c}-\vect{d})_j y_j\}
\end{equation}

If $\odot$ is an element-wise multiplication operator then:
\begin{equation}\label{prop_eq2}
(\vect{c} - \vect{d})^T \matr{B} \, (\vect{c} - \vect{d}) = \{ (\vect{c}-\vect{d}) \odot \vect{y} \}^T \matr{K} \{(\vect{c}-\vect{d}) \odot \vect{y}\} \ge 0
\end{equation}
where the last inequality holds since $\matr{K}$ is a kernel matrix which is positive definite by \ref{eq4b}. Therefore, from \ref{prop_eq1} and \ref{prop_eq2}:
\begin{equation}\label{prop_eq3}
\begin{pmatrix} \vect{c}^T & \vect{d}^T \end{pmatrix} \matr{H} \begin{pmatrix} \vect{c} \\ \vect{d} \end{pmatrix} \le 0
\end{equation}
for all vectors $\vect{c}$ and $\vect{d}$. Thus $L_D(\vect{\alpha},\vect{\theta})$ is a concave function of $(\vect{\alpha}, \vect{\theta})$. \qed
\end{proof} 

It immediately follows that problem \ref{8a} attempts to maximize a concave function under linear constraints and thus has a unique solution \citep{Nocedal:book}.

\subsection{Calculation of dual variables}
Dual variables $\vect{\alpha}$, $\vect{\theta}$, $\vect{\mu}$, $\vect{\psi}$ can be calculated as follows:
\begin{itemize}
\item[\PencilRight] Calculation of $\vect{\alpha}$, $\vect{\theta}$ requires the solution of a concave maximization problem \ref{8a} where the elements of $\matr{B}$ are chosen using a suitable kernel $\matr{K}(\vect{x_i}, \vect{x_j})$. This can be accomplished using an sequential minimal optimization (SMO) type active set technique \citep{Platt1998} or a projected conjugate gradient (PCG) technique \citep{Nocedal:book}. 

\item[\PencilRight] Once $\vect{\alpha}$ and $\vect{\theta}$ are known, equations \ref{7} and \ref{8} give $\vect{\mu} = C_1 \vect{e}_n - \vect{\alpha}$ and $\vect{\psi} = C_2 \vect{e}_n - \vect{\theta}$.
\end{itemize}

\subsection{Calculation of primal variables}

Primal variables $\vect{\beta}$, $\vect{\beta_0}$, $\vect{\xi}$, $\vect{\eta}$ can be calculated as follows:
\begin{itemize}
\item[\PencilRight] $\vect{\beta}$ is given by equation \ref{eq4cb}.

\item[\PencilRight] Calculation of  $\vect{\beta_0}$, $\vect{\xi}$, $\vect{\eta}$ is accomplished by considering the inequality constraints and the KKT \textit{complementarity} constraints for the problem \ref{3}:

\begin{align}\label{9}
&\xi_i \ge 0, \eta_i \ge 0 \\  \nonumber
&\xi_i \ge \rho_1 - y_i \left( \vect{\beta}^T \vect{h}(\vect{x_i}) + \beta_0 \right) \\ \nonumber
&\eta_i \ge -\rho_2 + y_i \left( \vect{\beta}^T \vect{h}(\vect{x_i}) + \beta_0 \right) \\ \nonumber
&\alpha_i \{ \xi_i - \rho_1 + y_i \left( \vect{\beta}^T \vect{h}(\vect{x_i}) + \beta_0 \right)\} = 0  \\ \nonumber
&\theta_i \{ \eta_i + \rho_2 - y_i \left( \vect{\beta}^T \vect{h}(\vect{x_i}) + \beta_0 \right) \}  = 0  \\ \nonumber
&\mu_i \xi_i = (C_1 - \alpha_i) \xi_i  = 0 \\ \nonumber
&\psi_i \eta_i = (C_2 - \theta_i) \eta_i = 0 
\end{align}

Given the positivity constraints \ref{4a} and the bound constraints \ref{eq3} and \ref{eq4}, we consider the following cases:

\begin{itemize}

\item[\HandRight] If $\alpha_i < C_1$ then $\xi_i = 0$ and similarly if $\theta_i < C_2$ then $\eta_i = 0$. 

\item[\HandRight] If $0 < \alpha_i < C_1$ then we have $\xi_i = 0$ and $\{ \xi_i - \rho_1 + y_i (\beta^T x_i + \beta_0)\} = 0$ which can be used to solve for $\beta_0$. 

\item[\HandRight] If $0 < \theta_i < C_2$ then we have $\eta_i = 0$ and $\{ \eta_i + \rho_2 - y_i \left( \vect{\beta}^T \vect{h}(\vect{x_i}) + \beta_0 \right) \} = 0$ which can be used to solve for $\beta_0$. 

\item[\HandRight] Similar to C-SVM, for stability purposes we can average the estimate of $\beta_0$ over all points where $0 < \alpha_i < C_1$ and $0 < \theta_i < C_2$.

\item[\HandRight] We can calculate $\xi_i$ for those points for which $\alpha_i = C_1$ using $ \xi_i = \rho_1 - y_i \left( \vect{\beta}^T \vect{h}(\vect{x_i}) + \beta_0 \right)$. 

\item[\HandRight] Similarly, if $\theta_i = C_2$ then $\eta_i =  y_i \left( \vect{\beta}^T \vect{h}(\vect{x_i}) + \beta_0 \right) - \rho_2$.

\end{itemize}
\end{itemize}

\section{Toy data}
In order to illustrate the differences between C-SVM and B-SVM we generated artificial data in 2 dimensions as follows: 
\begin{itemize}

\item[\PencilRight] Class $1$ consisted of 5 bivariate Normal clusters centered at $(0,0)$, $(\frac{1}{\sqrt{2}}, \frac{1}{\sqrt{2}})$, $(\frac{-1}{\sqrt{2}}, \frac{1}{\sqrt{2}})$, $(\frac{-1}{\sqrt{2}}, \frac{-1}{\sqrt{2}})$ and $(\frac{1}{\sqrt{2}}, \frac{-1}{\sqrt{2}})$ and covariance $\sigma^2_1 \matr{I}_2$ with $\sigma_1 = 0.2$.

\item[\PencilRight] Class $-1$ consisted of 4 bivariate Normal clusters centered at $(1,0)$, $(0,1)$, $(-1,0)$ and $(0,-1)$ with covariacne $\sigma^2_2 \matr{I}_2$ with $\sigma_2 = 0.2$. 

\end{itemize}

A radial basis function (RBF) kernel was chosen for computations. For the RBF kernel, the elements of $\matr{K}$ are given by:
\begin{equation}\label{eq5}
K(\vect{x_i}, \vect{x_j}) = K_{ij} = \mbox{exp} \left\{ -\gamma \left( \vect{x_i} - \vect{x_j} \right)^T \left( \vect{x_i} - \vect{x_j} \right) \right\}
\end{equation}

Our parameter settings were as follows:

\begin{itemize}

\item[\PencilRight] For both C-SVM and B-SVM we used the same kernel parameter $\gamma = 1$. 

\item[\PencilRight] For C-SVM was  used $C = 10$.

\item[\PencilRight]  For B-SVM we chose $\rho_1 = 1$ and $C_1 = 10$ (same as $C$ for C-SVM). Thus the parameters of the common penalty term $C_1 \sum_{i = 1}^n [\rho_1 - y_i(\vect{\beta}^T \vect{h}(\vect{x_i}) + \beta_0)]_{+}$ are chosen to be identical for C-SVM and B-SVM. 

\item[\PencilRight] The parameters of the second penalty term for B-SVM were chosen as $C_2 = 100$ and $\rho_2 = 1.5$. Thus B-SVM will encourage $g(\vect{x})$ to lie in the interval $[\rho_1, \rho_2] = [1, 1.5]$.

\end{itemize}

\begin{figure}[htbp]
\begin{center}
\includegraphics[width = 6.0in] {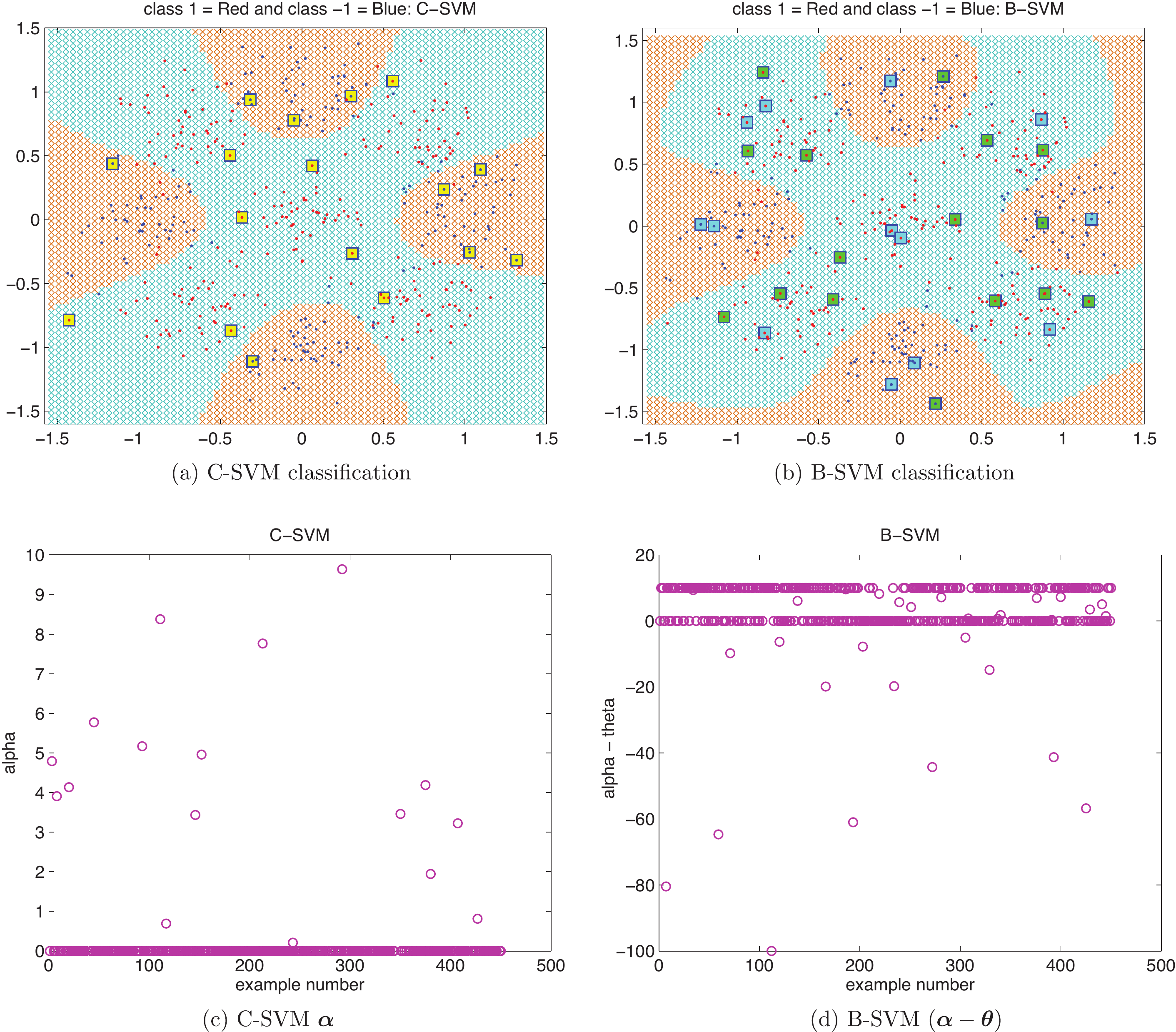}
\caption{Figure shows classification obtained for example data using (a) C-SVM and (b) B-SVM. \textcolor{red}{Red} and \textcolor{blue}{Blue} points (.) correspond to class $+1$ and $-1$ respectively. \textcolor{cyan}{Cyan} and \textcolor{orange}{Orange} x-marks (x) show the C-SVM and B-SVM decision rules evaluated at various points. Class $1$ membership is indicated in \textcolor{cyan}{Cyan} and class $-1$ membership is indicated in \textcolor{orange}{Orange}. The \textcolor{Yellow}{yellow} squares in (a) correspond to support points for which $0 < \alpha_i < C$. The \textcolor{cyan}{cyan} squares in (b) correspond to support points for which $0 < \theta_i < C_2$ and the \textcolor{ForestGreen}{green} squares correspond to support points for which $0 < \alpha_i < C_1$. The sparsity of solution is controlled by $\vect{\alpha}$ in the case of C-SVM and $(\vect{\alpha} - \vect{\theta})$ in the case of B-SVM (c) Shows $\alpha_i$ values for C-SVM. (d) Shows $(\alpha_i - \theta_i)$ values for B-SVM.  }
\label{figure2}
\end{center}
\end{figure}

\begin{figure}[htbp]
\begin{center}
\includegraphics[width = 6.0in] {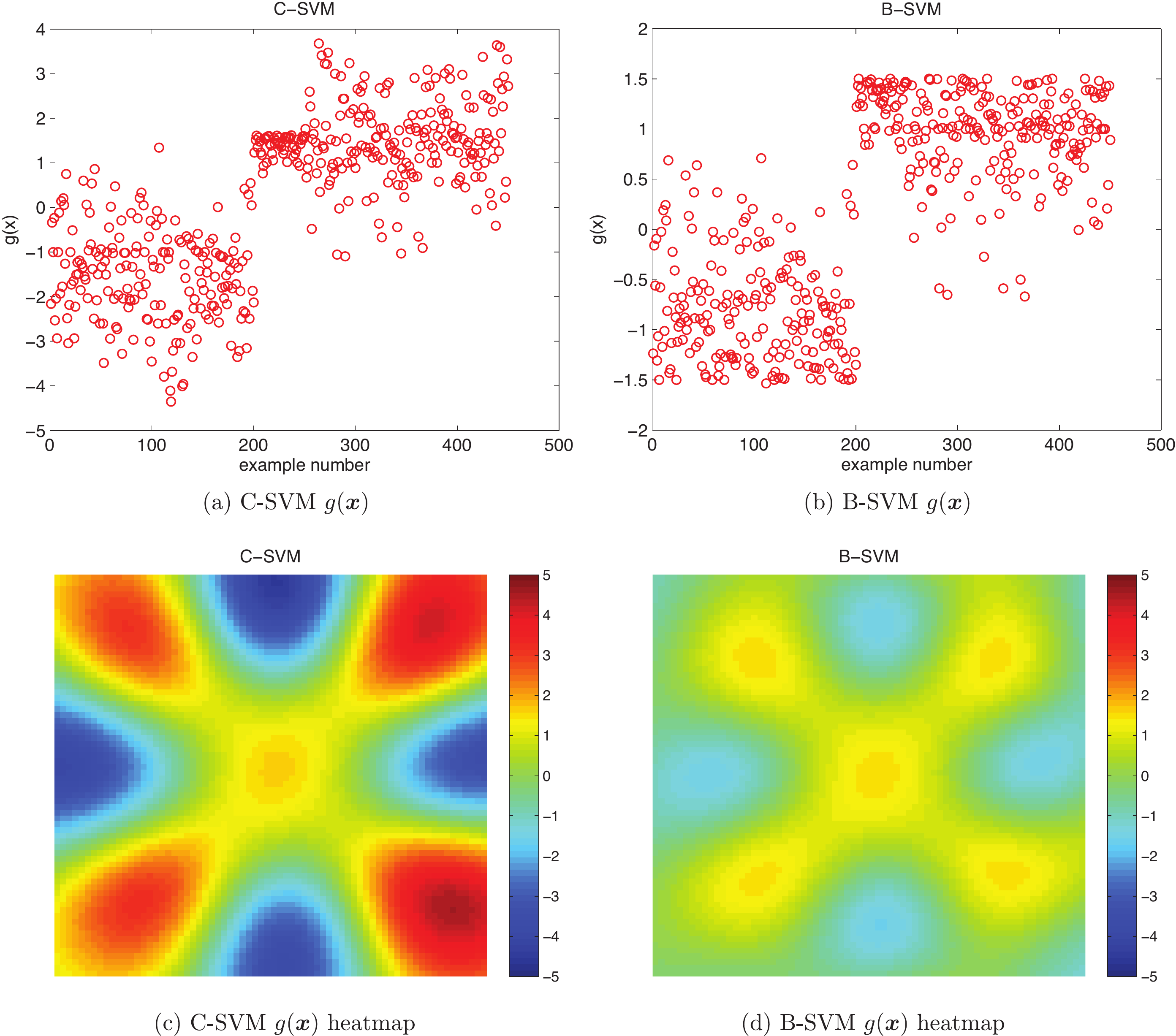}
\caption{Figure shows decision rule $g(\vect{x})$ for C-SVM (a) and B-SVM (b). Note that in B-SVM the second penalty term $C_2 \sum_{i = 1}^n [y_i(\vect{\beta}^T \vect{h}(\vect{x_i}) + \beta_0) - \rho_2]_{+}$ results in most of the $g(\vect{x})$ values in the interval $[\rho_1, \rho_2] = [1, 1.5]$. (c) Heat map of the decision rule $g(\vect{x})$ for C-SVM (d) Heat map of the decision rule $g(\vect{x})$ for B-SVM. In C-SVM the values of decision rule $g(\vect{x})$ are unbalanced in Class $1$. The central cluster located at $(0,0)$ in Class $1$ gets much smaller $g(\vect{x})$ values in C-SVM than the rest of the Class $1$. In B-SVM however, all clusters in Class $1$ including the one centered at $(0,0)$ get similar $g(\vect{x})$ values. This is a result of the second penalty term in the B-SVM objective function.}
\label{figure3}
\end{center}
\end{figure}

\begin{figure}[htbp]
\begin{center}
\includegraphics[width = 6.0in] {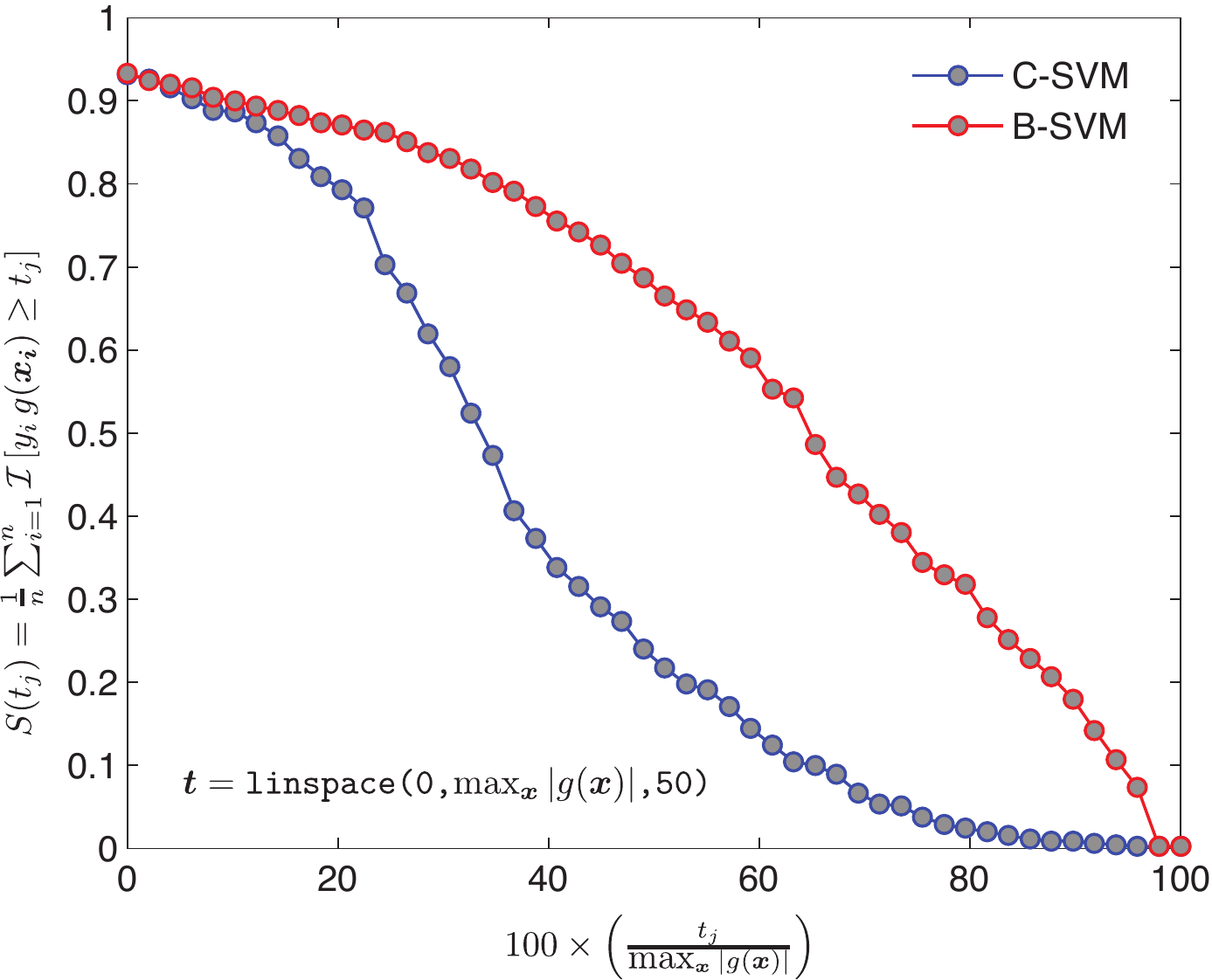}
\caption{Figure shows the fraction of points classified correctly by both C-SVM (\textcolor{blue}{blue curve}) and B-SVM (\textcolor{red}{red curve}) as a function of the decision rule threshold. The $x$-axis shows the decision rule threshold as a percentage of the maximum absolute value of the decision function $g(\vect{x})$ over all training points. The $y$-axis shows the overall classification accuracy or sensitivity of C-SVM and B-SVM.}
\label{figure4}
\end{center}
\end{figure}

Both C-SVM and B-SVM were fitted to the toy data described above. The following differences in the two solutions are noteworthy:

\subsection{$\alpha$-SVs and $\theta$-SVs}
The B-SVM dual problem \ref{8a} contains two variables $\vect{\alpha}$ and $\vect{\theta}$. Both $\alpha_i$ and $\theta_i$ are positive and satisfy the bound constraints given in \ref{8a}. Therefore, similar to C-SVM, we define 2 types of support vectors (SVs) in B-SVM:
\begin{itemize}
\item[\PencilRight] Points $i$ for which $\theta_i > 0$ are called the $\theta$-SVs  $\,\,\,$ \HandLeft $\,$ new SVs that arise in B-SVM
\item[\PencilRight] Points $i$ for which $\alpha_i > 0$ are called the $\alpha$-SVs $\,\,\,$ \HandLeft $\,$ standard C-SVM like SVs
\end{itemize}
Figures \ref{figure2}(a) and \ref{figure2}(b) show the C-SVM and B-SVM induced classification respectively for this example problem. Figure \ref{figure2}(b) shows $\alpha$-SVs for which $0 < \alpha_i < C_1$ and $\theta$-SVs for which $0 < \theta_i < C_2$. It is clear from \ref{eq4d} that the sparsity of a B-SVM decision rule depends on the quantities $(\alpha_i - \theta_i)$. Figures \ref{figure2}(c) and \ref{figure2}(d) show a plot of $\alpha_i$ for C-SVM and $(\alpha_i - \theta_i)$ for B-SVM respectively.

\subsection{Bounded decision rule}
Figures \ref{figure3}(a) and \ref{figure3}(b) show the decision rule values $g(\vect{x})$ over all training points for C-SVM and B-SVM. Recall that C-SVM does not enforce an upper limit on $g(\vect{x})$ whereas B-SVM attempts to encourage $g(\vect{x})$ to lie in $[\rho_1, \rho_2]$. It can be seen in Figure \ref{figure3}(b) that B-SVM was successful in limiting the absolute value of $g(\vect{x})$ to be $ < \rho_2 = 1.5$ with $C_2 = 100$. Figures \ref{figure3}(c) and \ref{figure3}(d) show a heat map of the decision rule for C-SVM and B-SVM respectively evaluated over a 2-D grid containing the training points. It can be seen that:
\begin{itemize}
\item[\PencilRight] The C-SVM decision rule values are \textit{unbalanced} in class $+1$ as the central cluster in class $+1$ gets lower $g(\vect{x})$ values compared to other clusters in class $+1$.
\item[\PencilRight] The decision rule values are \textit{balanced} in class $+1$ for B-SVM.
\end{itemize}

\subsection{Sensitivity curve}
We calculate the quantity: 
\begin{equation}\label{sen_eq1}
S(t) = \frac{1}{n} \sum_{i=1}^n \mathcal{I}\left[ y_i \, g(\vect{x_i}) \ge t \right]
\end{equation}
which is simply the fraction of correctly classified points (or sensitivity) using decision rule $g(\vect{x})$ at threshold $t$. To illustrate the variation in sensitivity of C-SVM and B-SVM decision rules:
\begin{itemize}
\item[\PencilRight] For both C-SVM and B-SVM, we divide the range of $g(\vect{x})$ into $50$ equally spaced points as follows (in MATLAB notation):
\begin{equation}\label{sen_eq2}
\vect{t} = \verb+linspace(0,+\mbox{max}_{\vect{x}} \, | g(\vect{x}) | \verb+,50)+
\end{equation}

\item[\PencilRight] Then we plot $100 \times \left( \frac{ t_j }{ \mbox{max}_{\vect{x}} \, | g(\vect{x}) |} \right)$ versus $S(t_j)$. 
\end{itemize}
Figure \ref{figure4} shows this sensitivity curve. It can be seen that for the same percentage threshold on the decision rule range:
\begin{itemize}
\item[\PencilRight] B-SVM has higher classification accuracy (or is more sensitive) than C-SVM. 
\item[\PencilRight] This effect is because of the balanced nature of decision rule values in B-SVM compared to C-SVM (see Figure \ref{figure3}(c) and \ref{figure3}(d)).
\end{itemize}

\section{Discussion and conclusions}
In this work, we considered the binary classification problem when the feature vectors in individual classes have finite co-variance. We showed that B-SVM is a natural generalization to C-SVM in this situation. It turns out that the B-SVM dual maximization problem \ref{8a} retains the concavity property of its C-SVM counterpart and C-SVM turns out to be a special case of B-SVM when $C_2 = 0$.
Two types of SVs arise in B-SVM, the $\alpha$-SVs which are similar to the standard SVs in C-SVM and $\theta$-SVs which arise due to the novel B-SVM objective function penalty \ref{2}. The B-SVM decision rule is more balanced than the C-SVM decision rule since it assigns $g(\vect{x})$ values that are comparable in magnitude to different sub-classes (or clusters) of class $+1$ and class $-1$. In addition, B-SVM retains higher classification accuracy compared to C-SVM as the decision rule threshold is varied from $0$ to $\mbox{max}_{\vect{x}} \, |g(\vect{x})|$. For a training set of size $n$, B-SVM results in a dual optimization problem of size $2n$ compared to a C-SVM dual problem of size $n$. Hence it is computationally more expensive to solve a B-SVM problem. 

In summary, B-SVM can be used to enforce balanced decision rules in binary classification. It is anticipated that the C-SVM leave one out error bounds for the \textit{bias free} case given in \cite{Jaakkola1999} will continue to hold in a similar form for \textit{bias free} B-SVM as well. 

\newpage
\bibliographystyle{plainnat}
\bibliography{pendse_bsvm_bib.bib}

\end{document}